\newcommand{\sharpP}{\#P\xspace}
\newcommand{\NP}{NP\xspace}
\newcommand{\fromto}{\longrightarrow}
\renewcommand{\to}{\fromto}
\renewcommand{\vec}[1]{\boldsymbol{#1}}
\newcommand{\on}{\operatorname}
\newcommand{\IR}{\mathbb{R}}
\newcommand{\IS}{\mathbb{S}}
\newcommand{\IN}{\mathbb{N}}
\newcommand{\hyp}{\on{hyp}}
\newcommand*{\sothat}{\colon}
\newcommand*{\from}{\colon}
\newcommand{\argmin}{\operatorname*{\arg\,\min}}
\newcommand{\argsort}{\operatorname*{\arg\, sort}}
\newcommand{\cX}{\mathcal{X}}
\newcommand{\cQ}{\mathcal{Q}}
\newcommand{\cR}{\mathcal{R}}
\newcommand{\cZ}{\mathcal{Z}}
\newcommand{\rhobc}{\rho_{\text{FETA}}}
\newcommand{\newnet}{\textsc{FATE-Net}\xspace}
\newcommand{\bordanet}{\textsc{FETA-Net}\xspace}
\newcommand{\ranksvm}{\textsc{RankSVM}\xspace}
\newcommand{\rankboost}{\textsc{RankBoost}\xspace}
\newcommand{\err}{\textsc{ERR}\xspace}
\newcommand{\ranknet}{\textsc{RankNet}\xspace}
\newcommand{\listnet}{\textsc{ListNet}\xspace}
\DeclarePairedDelimiter{\indic}{\llbracket}{\rrbracket}
\DeclarePairedDelimiterX{\norm}[1]{\lVert}{\rVert}{#1}
\newcommand{\makename}[3][s]{%
  \expandafter\newcommand\csname #2\endcsname{#3\xspace}%
  \expandafter\newcommand\csname #2s\endcsname{#3#1\xspace}%
}
\newacro{FETA}{First Evaluate Then Aggregate}
\newacro{FATE}{First Aggregate Then Evaluate}
\newif\ifcameraready
\author{
  Karlson Pfannschmidt\\
  Department of Computer Science\\
  Warburger Str. 100\\
  Paderborn University\\
  Germany\\
  \texttt{kiudee@mail.upb.de}\\
  \And
  Pritha Gupta \\
  Department of Computer Science\\
  Warburger Str. 100\\
  Paderborn University\\
  Germany\\
  \texttt{prithag@mail.upb.de}\\
  \And
  Eyke Hüllermeier \\
  Department of Computer Science\\
  Warburger Str. 100\\
  Paderborn University\\
  Germany\\
  \texttt{eyke@upb.de}\\
}
\begin{document}

\title{Deep Architectures for Learning \\ Context-dependent Ranking Functions}

\maketitle

\begin{abstract}
%
Object ranking is an important problem in the realm of preference
learning. On the basis of training data in the form of a set of rankings
of objects, which are typically represented as feature vectors, the goal is to learn a ranking function that predicts a linear order of any new set of objects.
%
Current approaches commonly focus on ranking by scoring, i.e., on learning an underlying latent utility function that seeks to capture the inherent utility of each object. These approaches, however, are not able to take possible effects of \emph{context-dependence} into account, where context-dependence means that the utility or usefulness of an object may also depend on what other objects are available as alternatives.  
%
In this paper, we formalize the problem of context-dependent ranking and present two  general approaches based on two natural representations of context-dependent ranking functions. Both approaches are instantiated by means of appropriate neural network architectures, which are evaluated on suitable benchmark task.
\end{abstract}


\section{Introduction}

In preference learning \citep{PL-book}, the learner is generally
%
provided with a set of items
(e.g., products) for which preferences are known, and the task is to learn a
function that predicts preferences for a new set of items
(e.g., new products not seen so far), or for the same set of items in a
different situation (e.g., the same products but for a different user).
Frequently, the predicted preference relation is required to form a total
order, in which case we also speak of a \emph{ranking problem}.
In fact, among the problems in the realm of preference learning, the task of
``learning to rank'' has probably received the most attention in the literature
so far, and a number of different ranking problems have already been introduced.

The focus of this paper is on so-called \emph{object ranking} \citep{cohe_lt98,kami_as10}. Given training data in the form of
a set of exemplary rankings of subsets of objects, the goal in object ranking is
to learn a ranking function that is able to predict the ranking of any new set
of objects.
As a typical example, consider an eCommerce scenario, in which a customer is
ranking a set of products, each characterized by different properties and
attributes, according to her preferences.

In economics, classical choice theory assumes that, for a given user, each alternative has an
inherent utility, and that choices and decisions are made on the basis of these
utilities. Yet, many studies have shown that these idealized assumptions are often violated
in practice.
For example, choices are also influenced by the decision context, i.e., by the availability of other alternatives \citep{huber1982adding,simonson1992,tversky1993,dhar2000}.
Motivated by observations of that kind, the focus of this paper is on the
problem of \emph{context-dependent ranking}.
In this regard, our contributions are as follows.
First, we formalize the problem of context-dependent ranking and present two
general approaches based on two natural representations of context-dependent
ranking functions: \ac{FETA} and \ac{FATE}.
Second, both approaches are instantiated by means of appropriate neural network
architectures, called \bordanet and \newnet, respectively.
These architectures can be trained in an end-to-end manner.
Third, we conduct an experimental evaluation of our methods, using both
synthetic and real-world data, for which context-dependence is playing a
relevant role.
Empirically, we are able to show that our methods outperform traditional
approaches on these tasks.

\section{Object ranking}
\label{sec:learning}

We assume a reference set of objects denoted by $\cX$, where each object
$\vec{x} \in \cX$ is described by a feature vector; thus, an object is a vector
$\vec{x} = (x_1, \dotsc, x_d) \in \IR^d$, and $\cX \subseteq \IR^d$.
A \emph{\problem} is specified by a finite subset $Q = \{\vec{x}_1, \ldots, \vec{x}_n \} \subseteq \cX$ of objects, for some $n \in \mathbb{N}$, and the task itself consists of predicting a preferential ordering of these objects, that is, a \emph{ranking}. 
The latter is encoded in terms of a permutation $\pi \in \mathbb{S}_n$, where $\mathbb{S}_n$ denotes the set of all permutations of length $n$, i.e.,
all mappings $[n] \fromto [n]$ (symmetric group of order $n$).
A permutation $\pi$ represents the total order $\succ$ such that $\vec{x}_{\pi^{-1}(k)} \succ \vec{x}_{\pi^{-1}(k+1)}$ for all $k \in [n-1]$, 
where $\pi(k)$ is the position of the $k$th object $\vec{x}_k$, and $\pi^{-1}(k)$ the index of the object on position $k$ ($\pi$ is often called a \emph{ranking} and $\pi^{-1}$ an \emph{ordering}).
Formally, a ranking function can thus be understood as a mapping $\rho: \, \cQ \fromto \cR$,
where $\cQ = 2^\mathcal{X} \setminus \emptyset$ is the \emph{\problem space} (or simply task space) and $\cR = \bigcup_{n \in [N]} \mathbb{S}_n$ the \emph{ranking space}.

Methods for object ranking seek to induce a ranking function from training data $\{ (Q_i , \pi_i) \}_{i=1}^N$ in the form of exemplary ranking tasks $Q_i$ together with observed rankings $\pi_i$. Typically, this is done by learning a latent utility function
\begin{equation}\label{eq:uf} 
U \from \cX \fromto \IR \enspace ,
\end{equation}
which assigns a real-valued score to each object $\vec{x} \in \cX$.
Given a task $Q$, a ranking is then simply constructed by sorting the objects $\vec{x} \in Q$ according to their scores. This approach
implies important properties of the induced preferences, i.e.,
the set of rankings $\{\rho(Q) \, \vert \, Q \in \cQ \}$ produced by $\rho$ on the \problem space.
In particular, preferences have to be \emph{transitive} and, moreover,
\emph{\context-independent}.

Context-independence means that the preference between two items $\vec{x}$ and $\vec{y}$ does not depend on the set of other items $Q \setminus \{ \vec{x} , \vec{y} \}$ in the query (which we consider as defining the context in which $\vec{x}$ and $\vec{y}$ are ranked). More formally, consider any pair of items $\vec{x}, \vec{y} \in \mathcal{X}$ and any subsets $Q, Q' \subseteq \mathcal{X}$ such that $\vec{x}, \vec{y} \in Q \cap Q'$. Moreover, let $\succ$ be the ranking induced by $\rho$ on $Q$ and $\succ'$ the corresponding ranking on $Q'$. Then, context-independence implies that $\vec{x} \succ \vec{y}$ if an only if $\vec{x} \succ' \vec{y}$. Obviously, context-independence is closely connected to the famous Luce axiom of choice \citep{luce1959}.

Let us note that the notion of ``context'' is also used with a different meaning in the learning-to-rank literature (and in machine learning in general), namely as a kind of extra dimension. For instance, \citet{agrawal06} illustrate their notion of ``context-sensitive ranking'' with an example in which objects are actors and the extra dimension is the film genre: ``Contextual preferences take the form that item $i_1$ is preferred to item $i_2$ in the context of $X$. For example, a preference might state the choice for Nicole Kidman over Penelope Cruz in drama movies, whereas another preference might choose Penelope Cruz over Nicole Kidman in the context of Spanish dramas.'' Obviously, this differs from our definition of ``context'', which is derived from its use in the economics literature. 


\section{Context-Dependent Ranking}

In practice, the assumption of context-independence of preferences is often
violated, because 
preferences of individuals are influenced by the context in
which decisions are made \citep{bettman1998constructive}.
In economics, three major context effects have been identified in the literature:
the compromise effect \citep{simonson1989choice},
the attraction effect \citep{huber1983market}, and the similarity effect
\citep{tversky1972elimination}.
To capture effects of context-dependence, our goal is to learn a generalized latent utility function
\begin{equation}\label{eq:latut}
  U \from \cX \times 2^{\cX} \fromto \IR \enspace ,
\end{equation}
which can be used in the same way as (\ref{eq:uf}) to assign a score to each object of the \problem.
Since the utility function has a second argument, namely a context, it allows for representing \context-dependent ranking functions
$$
\rho \big(\{ \vec{x}_1, \ldots , \vec{x}_n \} \big) = 
\argsort_{i \in [n]} U \big(\vec{x}_i , C_i \big) \, ,
$$
where, for each object $\vec{x}_i$ in a task $Q = \{\vec{x}_1, \ldots , 
 \vec{x}_n \}$, we denote by $C_i = C(\vec{x}_i) = Q \setminus \{ \vec{x}_i \} = \{\vec{x}_1, \ldots , \vec{x}_{i-1} , \vec{x}_{i+1}, \ldots , \vec{x}_n \}$ its context in this task.

In this section, we present two general approaches based on two natural  representations of context-dependent ranking functions. These representations are based on two rather natural ways to decompose the problem of assigning a context-dependent score to an object: ``First evaluate then aggregate'' (\ac{FETA}) first evaluates the object in each ``sub-context'' of a fixed size, and then aggregates these evaluations, whereas ``first aggregate then evaluate'' (\ac{FATE}) first aggregates the entire set of alternatives into a single representative, and then evaluates the object in the context of that representative. Interestingly, the former approach has already been used in the literature \citep{volkovs2009}, at least implicitly, while the latter is novel to the best of our knowledge. Before explaining these approaches in more detail, we make a few more general remarks on the representation of context-dependent ranking functions.

\subsection{Modeling Context-Dependent Utility}

The representation of a context-dependent utility function \eqref{eq:latut} comes with (at least) two important challenges, which are both connected to the fact that the second argument of such a function is a \emph{set} of \emph{variable} size. First, the arity of the function is therefore not fixed, because different ranking tasks, and hence different contexts, can have different size. Second, the function should be \emph{permutation-invariant} (symmetric) with regard to the elements in the second argument, the context, because the order in which the alternative objects are presented does not play any role.  
Formally,  function $f \from \cX^k \fromto \IR$ is \emph{permutation-invariant} if and only if
$f(\vec{x}_1, \vec{x}_2, \dots, \vec{x}_k) =
    f \left(\vec{x}_{\pi(1)}, \vec{x}_{\pi(2)}, \dots, \vec{x}_{\pi(k)} \right)$
  for all permutations $\pi$ of the indices $[k]=\{1, \ldots , k\}$.
  A function with this property is also called
\emph{symmetric} \citep{stanley2001}.


As for the problem of rating objects in contexts of variable size, one possibility is to decompose a context into sub-contexts of a fixed size $k$. More specifically, the idea is to learn context-dependent utility functions of the form
$U_k \from \cX \times \cX^k \fromto \IR$,
and to represent the original function \eqref{eq:latut} as an aggregation
\begin{equation}
U(\vec{x} , C)  =  \sum_{k=1}^K  \bar{U}_k(\vec{x}, C) \\
  = \sum_{k=1}^K \frac{1}{{|C| \choose k}} \sum_{C' \subseteq C, |C'| = k} U_k(\vec{x} , C') \, .
\label{eq:agg}
\end{equation}
Note that, provided permutation-invariance holds for $U_k$ as well as the aggregation, $U$ itself will also be symmetric. Taking the arithmetic average as an aggregation function, the second condition is obviously satisfied. Thus, the problem that essentially remains is to guarantee the symmetry of $U_k$.

Roughly speaking, the idea of the above decomposition is that dependencies and interaction effects between objects only occur up to a certain order $K$, or at least can be limited to this order without loosing too much information. This is an assumption that is commonly made in the literature on aggregation
functions \citep{grab_af} and also in other types of applications. The special cases $k=0$ and $k=1$ correspond to independence and pairwise interaction, respectively.

An interesting question concerns the expressivity of a $K$th order approximation \eqref{eq:agg}, where, for example, expressivity could be measured in terms of the number of different ranking functions that can be defined on $\cX$.  
To study this question, suppose that $\cX$ is finite and consists of $N$ objects. 
Obviously, for $K=0$, only $N!$ different ranking functions can be produced,
because the entire function is determined by the order on the maximal \problem
$Q = \mathcal{X}$. Naturally, the number of possible ranking functions should increase with increasing $K$. 
For the extreme case $K=N-1$, we can indeed show that all ranking functions can be
generated (see Proposition~1 in the Appendix).




\subsection{\acl{FETA}}\label{ssub:FETA}


Our first approach realizes \eqref{eq:agg} for the special case $K=1$, which can be seen as a first-order approximation of a fully \context-dependent ranking function.
Thus, we propose the representation of a ranking function $\rho$
which, in addition to a utility function $U_0 \sothat \mathcal{X} \fromto [0,1]$, is based on a pairwise predicate $U_1 \sothat \mathcal{X} \times \mathcal{X} \fromto [0,1]$. Given a \problem
$Q = \{ \vec{x}_1 , \ldots , \vec{x}_n \} \subseteq \mathcal{X}$, a ranking is obtained as follows:
\begin{equation}
  \rho_{\text{\acs{FETA}}}(Q)  = 
  \argsort_{i \in [n]} \; U(\vec{x}_i, C_i) 
  =   \argsort_{i \in [n]}  \; \biggl\{
U_0(\vec{x}_i) + \frac{1}{n-1} \sum_{j \in [n] \setminus \{i\}} U_1(\vec{x}_i , \vec{x}_j)
\biggr\}\label{eq:bc}
\end{equation}
We refer to this approach as \acf{FETA}. 

The observation that \ac{FETA} is able to capture context-dependence is quite obvious.
As a simple illustration, suppose that $U_0 \equiv 0$ and $U_1$ is given on $\mathcal{X} = \{a,b,c,d\}$
as follows:
\[
  \big(U_1(x,y) \big)_{x,y \in \mathcal{X}} = \left( \begin{array}{cccc}
    -   & 0.7 & 0.5 & 0.1 \\
    0.2 & -   & 0.8 & 0.9 \\
    0.5 & 0.2 & -   & 0.4 \\
    0.7 & 0.1 & 0.5 & -
  \end{array} \right)
\]
For the queries $Q_1= \{a, b, c\}$ and $Q_2 = \{a, b, d\}$ we 
obtain rankings $\rho_{\text{\acs{FETA}}}(Q_1) = a \succ b \succ c$ and $\rho_{\text{\acs{FETA}}}(Q_2) = b \succ a \succ d$. That is, the preference between $a$ and $b$ changes depending on whether the third item to be ranked is $c$ or $d$.

It is important to note that our interpretation of $U_1$ is not the standard interpretation in terms of a pairwise preference relation. Specific properties such as asymmetry ($U_1(\vec{x}, \vec{y}) = 1 - U_1(\vec{y}, \vec{x})$) are therefore not necessarily required, although they could be incorporated for the purpose of regularization. Instead, $U_1(\vec{x}, \vec{y})$ should be interpreted more generally as a measure of support given by $\vec{y}$ to $\vec{x}$. This interpretation is in line with \citet{Ragain2016}, who model distributions on rankings using Markov chains. Here, individual preferences are defined in terms of probabilities (of the stationary distribution), and binary relations $U_1(\vec{x}, \vec{y})$ define transition probabilities. Thus, $U_1(\vec{x}, \vec{y})$ is the probability of moving from $\vec{y}$ to $\vec{x}$, and the larger the probability of being in $\vec{x}$, the higher the preference for this item.
Roughly speaking, $U_1(\vec{x}, \vec{y})$ is a measure of how favorable it is for $\vec{x}$ that $\vec{y}$ is part of its context $C$. In other words, a large value $U_1(\vec{x}, \vec{y})$ suggests that, whenever $\vec{x}$ and $\vec{y}$ are part of the objects to be ranked, $\vec{x}$ tends to occupy a high position.

\citet{volkovs2009} introduce the algorithm \textsc{BoltzRank}, which learns a
combination of pairwise and individual scoring functions, thus falling under
our category of \ac{FETA} approaches.


\subsection{\acl{FATE}}\label{ssub:af}


To deal with the problem of contexts of variable size, our previous approach was to decompose the context into sub-contexts of a fixed size, evaluate an object $\vec{x}$ in each of the sub-contexts, and then aggregate these evaluations into an overall assessment. An alternative to this ``first evaluate then aggregate'' strategy, and in a sense contrariwise approach, consists of first aggregating the context into a representation of fixed size, and then evaluating the object $\vec{x}$ in this ``super-context''.

More specifically, consider a ranking task $Q$. To evaluate an object $\vec{x}$ in the context $C(\vec{x}) = Q \setminus \{ \vec{x} \}$, the ``first aggregate then evaluate'' (FATE) strategy first computes a representative for the context:
\begin{equation}\label{eq:rep}
  \mu_{C(\vec{x})} = \frac{1}{|C(\vec{x})|} \sum_{\vec{y} \in C(\vec{x})} \phi(\vec{y}) \, ,
\end{equation}
where $\phi \from \cX \to \cZ$ maps each object $\vec{y}$ to
an $m$-dimensional embedding space $\cZ \subseteq \IR^m$. The evaluation itself is then realized by a context-dependent utility function $U:\, \cX \times \cZ \fromto \mathbb{R}$, so that we eventually obtain a ranking 
\begin{equation}
  \rho_{\text{\acs{FATE}}}(Q) = \argsort_{\vec{x} \in Q}\
  U \bigl( \vec{x}, \mu_{C(\vec{x})} \bigr) \, .
\end{equation}
A computationally more efficient variant of this approach is obtained by including an object $\vec{x}$ in its own context, i.e., by setting $C(\vec{x}) = Q$ for all $\vec{x} \in Q$. In this case, the aggregation \eqref{eq:rep} only needs to be computed once. 
Note, that this approach bears resemblance to the recent work by
\citet{zaheer2017} on dealing with set-valued inputs and the general approach
proposed by \citet{ravan2017} on encoding equivariance with respect to
group operations.


\section{Neural Architectures}

In this section, we propose realizations of the FETA and FATE approaches in terms of neural network architectures \bordanet and \newnet, respectively. 
Our design goals for both neural networks are twofold.
First, they should be end-to-end trainable on any differentiable listwise
ranking loss function.
Second, the architectures should be able to generalize beyond the \problem sizes encountered in the training data,
since in practice it is unreasonable to expect all rankings to be of similar
size.
Our focus is on optimizing the 0/1-ranking loss, for which we introduced a
suitable differentiable surrogate loss called the \emph{hinge ranking loss}
which is described in the Appendix.
However, it is also possible to substitute it with any other differentiable loss
function.
\subsection{\bordanet Architecture}
\ac{FETA} as outlined above requires the binary predicate $U_1$ to be given.
In \bordanet, learning this predicate is accomplished by means of a deep neural
network architecture.
More specifically, we make use of the CmpNN architecture
\citep{Rigutini2011,Huybrechts2016}.
\begin{figure}[tb]
  \centering
  \includegraphics[width=0.8\linewidth]{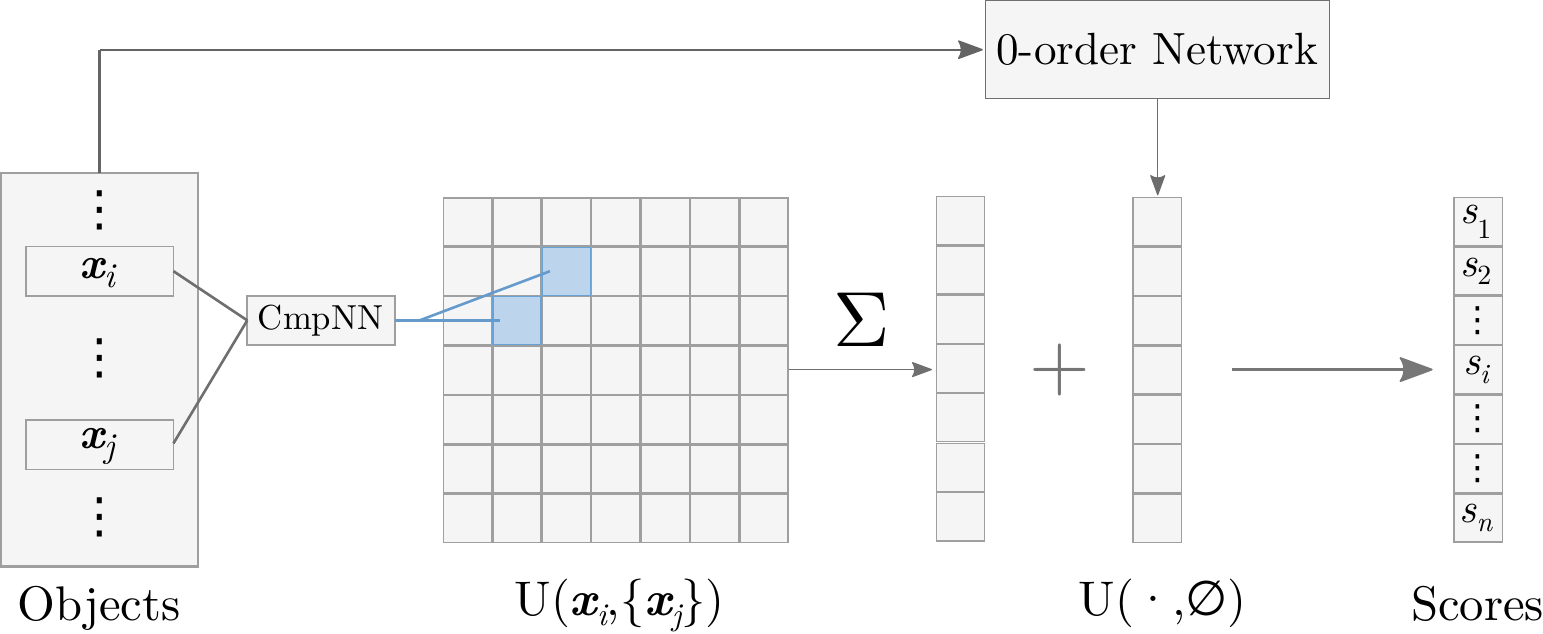}
  \caption{The \bordanet architecture implementing the \ac{FETA}
  decomposition.}
  \label{fig:bordanet}
\end{figure}
In our \bordanet architecture (shown in Figure~\ref{fig:bordanet}),
we evaluate the CmpNN network on all pairs
$(\vec{x}_i, \vec{x}_j)$ of objects in the \problem $Q$ and build up a pairwise
relation $R = (r_{i,j})$, where $r_{i,j} = U_1(\vec{x}_i , \vec{x}_j)$. Using the notation of \citet{Rigutini2011}, this relation is defined as follows:
\begin{equation}
  r_{i,j} = \begin{cases}
    N_{+}([\vec{x}_i, \vec{x}_j]) & \text{if } i < j\\
    N_{-}([\vec{x}_i, \vec{x}_j]) & \text{otherwise}\\
  \end{cases}
\end{equation}
This step is highlighted in blue in Figure~\ref{fig:bordanet}.
Then, each row of the relation $R$ is summed up to obtain a score $\bar{U}_1(\vec{x}, Q)$ for each object $\vec{x}_i \in Q$.
Each $\vec{x}_i$ is also passed through a $0$th-order network that directly
outputs latent utilities $U_0(\vec{x}_i,Q)$.
Here, we use a densely connected, deep neural network with one output unit.
The final score for object $\vec{x}_i$ is then given by
$U(\vec{x}_i,Q) = U_0(\vec{x}_i,Q) + \bar{U}_1(\vec{x},Q)$.

The training complexity of \bordanet is $\mathcal{O}\left(N d q^2\right)$, where $N$ denotes
the number of rankings, $d$ is the number of features per object, and $q$ is an upper bound on the number of objects in each ranking.
For a new ranking task $Q$ (note that we can predict the ranking for any
task size)
the prediction time is in $\mathcal{O}\left(d |Q|^2\right)$.



\subsection{\newnet Architecture}

The \newnet architecture is depicted in Figure~\ref{fig:newnet}.
Inputs are the $n$ objects of the \problem
$Q = \{\vec{x}_1, \dots, \vec{x}_n\}$ (shown in green).
Each object is independently passed through a deep, densely connected embedding
layer (shown in blue).
The embedding layer approximates the function $\phi$ in \eqref{eq:rep}, where, for reasons of computational efficiency, we assume objects to be part of their context (i.e., $C(\vec{x}) = Q$ for all $\vec{x} \in Q$).
Note that we employ weight sharing, i.e., the same embedding is used for
each object.
Then, the representative $\mu_{C(\vec{x})} = \mu_Q$ for the context is computed by averaging the
representations of each object.
To calculate the score $U(\vec{x},Q)$ for an object $\vec{x}_i$, the feature vector
is concatenated with $\mu_Q$ to form the input to the joint hidden layer 
(here depicted in orange).


\begin{figure}[tb]
  \centering
  \includegraphics[width=0.8\linewidth]{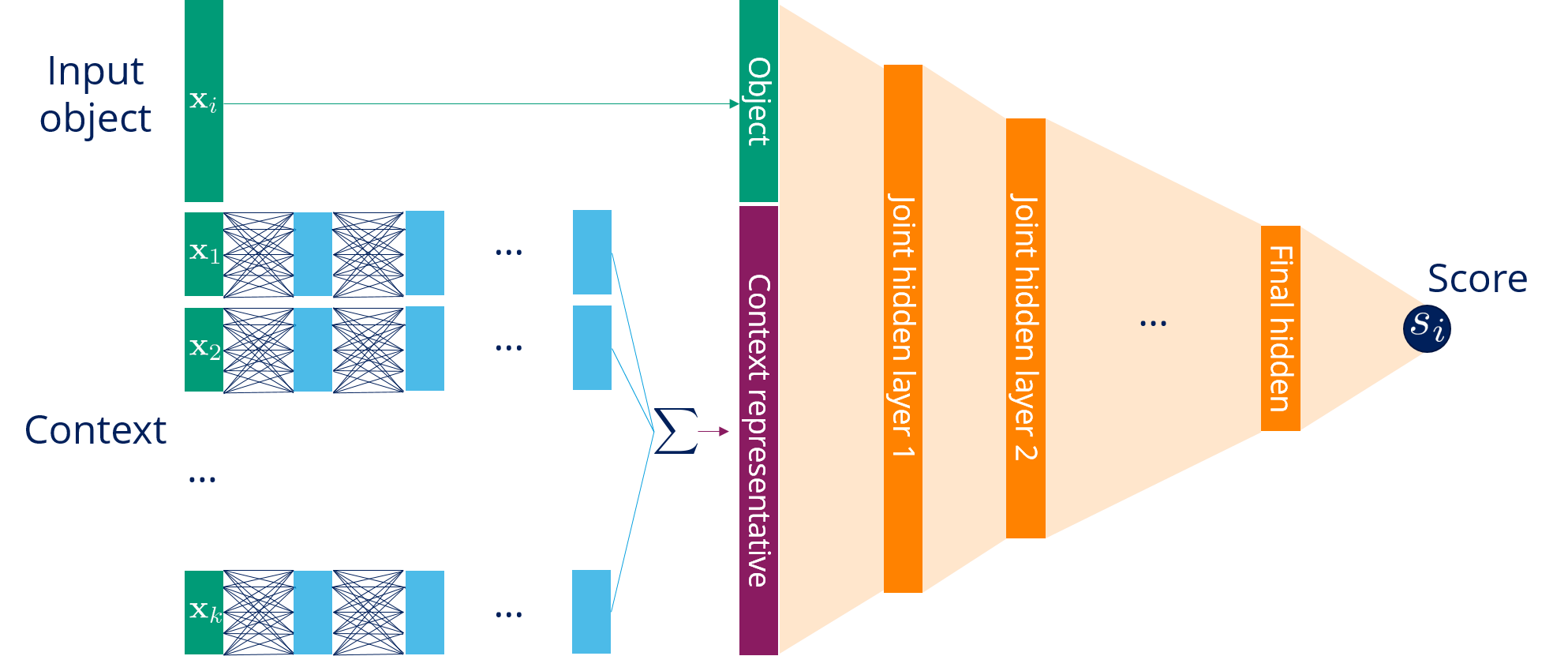}
  \caption{The \newnet architecture implementing the \ac{FATE} decomposition.
  Here we show the score head for object $\vec{x}_i$.}
  \label{fig:newnet}
\end{figure}

The training complexity of \newnet is $\mathcal{O}\left(N d q^2\right)$,
where $N$ denotes the number of rankings,
$d$ is the number of features per object, and $q$ is an upper bound on the
number of objects in each ranking.
For a new query $Q$ (note that we can predict the ranking for any
query size) the prediction can be done in $\mathcal{O}(d |Q|)$ time
(i.e., \emph{linear} in the number of objects).
This is because, if objects are part of their context, the representative $\mu_Q$
has to be computed only once for the forward pass.
This makes the \newnet architecture more efficient to use than \bordanet.


\section{Empirical Evaluation}
\label{sec:empirical_evaluation}

In order to empirically evaluate our \newnet and \bordanet architectures, we make use of synthetic and real-world data.
We mainly address the following questions:
Are the architectures suitable for learning \context-dependent ranking functions, and how do the approaches \ac{FETA} and \ac{FATE} compare with each other?
Can the representation learned on one query size be generalized
    to arbitrary sizes of \problems?

For typical real-world datasets such as OHSUMED and LETOR \citep{letor}, \context-dependence is
difficult to ascertain.
For the evaluation of \context-dependent ranking models, we therefore propose two
new challenging benchmark problems, which are both inspired by real-world problems: the \emph{medoid} and the \emph{hypervolume}
problem. 
Besides, we also analyze a real-world dataset related to the problem of depth estimation in images. 
As baselines to compare with, we selected representative algorithms for three important classes of ranking methods: Expected rank regression (ERR) \citep{kami05} as a representative for so-called pointwise ranking algorithms, \ranksvm \citep{RankSVM} as a state-of-the-art pairwise ranking model, and
deep versions of \ranknet \citep{Burges2005,burges2010,Tesauro1989} and
\listnet \citep{Cao2007,Luo2015a}, which represent the family of deep latent
utility models.

All experiments are implemented in Python, and the code is publicly
available%
\ifcameraready
\footnote{\url{https://github.com/kiudee/cs-ranking}}.
\else
\footnote{URL will be provided for the camera ready version.}.
\fi
The hyperparameters of each algorithm were tuned with scikit-optimize
\citep{skopt} using nested cross-validation.
We evaluate the algorithms in terms of 
0/1-accuracy  $\on{d}_{\text{ACC}}(\pi, \tau_{\vec{s}}) = \indic{\pi = \tau_{\vec{s}}}$,
0/1-ranking accuracy $\on{d}_{\text{RA}}(\pi, \vec{s}) = 1 - \on{d}_{\text{RL}}(\pi, \vec{s})$, and Spearman rank correlation $ \on{d}_{\text{Spear}}(\pi, \tau_{\vec{s}}) = 1 - 
    6 \sum_{i=0}^n (\pi(i)-\tau_{\vec{s}}(i))^2)/(n (n^2 - 1))$, where $\tau_{\vec{s}}$ is the ranking induced by the predicted score vector $\vec{s}$.
All implementation details for the experiments are listed in the Appendix.

\subsection{The Medoid Problem}\label{sub:medoid}

Our first synthetic problem is called the medoid problem.
The goal of the algorithms is to sort a set $Q$ of randomly generated points
in $\mathbb{R}^2$ based on their distance to the medoid of $Q$.
This problem is inspired by the setting of similarity learning, where the
goal is to learn a similarity function from triplets of objects \citep{Wang2014}.
The rankings produced by this procedure take the distance to each point in the
\problem into account.
Thus the medoid and subsequently the resulting ranking are sensitive to changes
of the points in the \problem.

For the experiment, we generate \num{100000} sets of \num{5} random points and
determine the rankings as described above.
The instances are split into $\SI{10}{\percent}$ training and $\SI{90}{\percent}$
test data.
This is repeated \num{10} times to get an estimate of the variation across
datasets.

\subsection{The Hypervolume Problem}

In multi-objective optimization, the goal is to find the set of objects that
are non-dominated by any other object in terms of their fitness (solution quality).
The set of all non-dominated objects is called the Pareto-set.
Multi-objective evolutionary algorithms (MOEAs) approximate the Pareto-set by iteratively improving a population of objects.
During optimization, it is not only important to improve the population's fitness, but also to preserve its diversity.
This allows the population to cover the complete Pareto-front.

The \emph{hypervolume} is a set measure, which computes the volume dominated by a
given set of objects. This very naturally encodes both dominance as well as diversity, which makes it a popular fitness criterion \citep{bader2010}.
Usually, we are also interested in the \emph{contribution} of each object/point on the
Pareto-front to the hypervolume.
\citet{bringmann12} proved that computing exact hypervolume contributions is
\sharpP-hard and \NP-hard to approximate.

Our idea is to convert this problem into a challenging, \context-dependent
ranking problem.
The input for the learner is the sets of points on the Pareto-front, and
the target is a ranking of these points based on their contribution to the
hypervolume.
It is apparent that a learner given only a set of data points as input, needs
to take all of the points into account to establish an accurate ranking.

Similar to the Medoid dataset, we generate \num{300000} sets of \num{5}
random points and determine the rankings as described.
The instances are split into $1/3$ training (\num{100000}) and $2/3$ test data
(\num{200000}).
We repeat this \num{5} times to get an estimate of the variation.

\subsection{Image Region Depth Estimation}
As a real-world case study, we tackle the problem of
relative depth estimation of regions in monocular images.
\citet{Ewerth2017} motivate the formalization of this task as an object ranking problem and construct an object ranking dataset on the basis of the Make3D dataset, which consists of 534 photos with an original image resolution of $1704 \times 2272$  \citep{Make3D}:
The images are segmented into $3355$ (i.e., $61 \times 55$) super pixels and
different feature sets are extracted for each super pixel.
We use the feature set \emph{Basic}, where basic monocular depth clues are
available for each super pixel: linear perspective, atmospheric perspective, texture gradients, occlusion of objects, usual size of objects, relative height, relative size, distribution of light. These depth clues suggest that context-dependence could be a relevant issue in depth estimation.
The ground truth rankings are constructed by ordering the super pixels based
on their absolute depth.

Since the size of the rankings is too large for most of the approaches, we
sample subrankings of size $172$ for training. 
\bordanet in addition also samples several subrankings of size $5$ during the
training process.
Predictions are always the complete rankings of size $3355$,
which are compared with the ground truth rankings using 0/1-ranking accuracy.
Super pixels with a distance of more than $\SI{80}{\meter}$ are treated as
tied, because this exceeds the range of the sensor.



\subsection{Results and Discussion}


\begin{table*}[t]
  
  \centering
  \caption{Mean and standard deviation of the losses on the medoid (top) and hypervolume (below) data
    (measured across 10 outer cross validation folds). Best entry for each
    loss marked in bold.}
  \sisetup{
    table-align-uncertainty=true,
    separate-uncertainty=true,
  }
  \renewrobustcmd{\bfseries}{\fontseries{b}\selectfont}
  \renewrobustcmd{\boldmath}{}
  \resizebox{\textwidth}{!}{
  \begin{tabular}{
    l
    S[table-format=0.3(3),detect-weight,mode=text]
    S[table-format=0.3(3),detect-weight,mode=text]
    S[table-format=0.3(3),detect-weight,mode=text]
    S[table-format=0.3(3),detect-weight,mode=text]
    S[table-format=0.3(3),detect-weight,mode=text]
    S[table-format=0.3(3),detect-weight,mode=text]}
  \toprule
  Ranker & \err & \ranksvm  & \ranknet & \listnet &  \bordanet & \newnet  \\
  \midrule
  $\on{d}_{\text{Spear}}$ &   -.001 +- .002 &  .000+-.002  &  .417+-.001 & 0.361(3) & 0.594(32) & \bfseries 0.861+-0.007 \\
  $\on{d}_{\text{RA}}$ & .500+-0.001  & .500+-0.001 & .682+-0.001 & .681(2) & 0.759(14) & \bfseries 0.901+-0.004\\
  $\on{d}_{\text{ACC}}$ & 0.008+-0.001 & 0.008+-0.001 & 0.088+-0.001 & .087(1) & 0.088+-0.001  & \bfseries 0.443+-0.016\\
   \midrule
   $\on{d}_{\text{Spear}}$ & 0.001(2)  & -0.001(2)   &  0.419(1) & .418(1) &  0.682(5)   & \bfseries 0.894(4)  \\
   $\on{d}_{\text{RA}}$ &  0.500(1) & 0.500(1)  & 0.683(1) & .683(0) & 0.802(2) &  \bfseries 0.920(3)  \\
   $\on{d}_{\text{ACC}}$ &  0.008(1) & 0.008(0) & 0.089(1) & .07(39) & 0.192(4) &  \bfseries 0.508(13)  \\ 
  \bottomrule
  \end{tabular}}
  \label{tab:medoidhyper}
\end{table*}

The results on the Medoid and the Hypervolume dataset are shown in
Table~\ref{tab:medoidhyper}.
ERR and \ranksvm completely fail on both tasks, both having a correlation
of 0 with the target rankings.
This can be explained by the fact that both approaches ultimately learn a linear
model, while the problems are highly non-linear.
Being non-linear latent-utility approaches, \ranknet and \listnet are able to improve upon
random guessing and achieve a 0/1-ranking accuracy of around \SI{68}{\percent}.
This result is surprising, considering that both networks establish the final ranking 
by scoring each point independently, i.e., not taking the other points of
the \problem into account. 

Our architectures \bordanet and \newnet are both able to make use of the given
\context provided by the \problem and beat the \context-insensitive
approaches by a wide margin.
With a 0/1-ranking accuracy of more than \SI{90}{\percent},
\newnet  even performs significantly better than \bordanet.
This suggests that the pairwise decomposition (first-order
approximation) is not able to completely capture the
higher order interactions between the objects.

Since the ranking size is fixed for the Medoid and Hypervolume dataset,
we ran additional experiments we varied the size of the rankings during
test time.
The results are shown in Figure~2 of the Appendix.


\begin{table*}[t]
  \centering
  \caption{Results on the test set for the relative depth dataset with Basic
    feature set.}
  \sisetup{
    table-align-uncertainty=true,
    separate-uncertainty=true,
  }
  \renewrobustcmd{\bfseries}{\fontseries{b}\selectfont}
  \renewrobustcmd{\boldmath}{}
  \begin{tabular}{
    l
    S[table-format=3.3(3),detect-weight,mode=text]
    S[table-format=3.3(3),detect-weight,mode=text]
    S[table-format=3.3(3),detect-weight,mode=text]
    S[table-format=3.3(3),detect-weight,mode=text]}
  \toprule
  Ranker &  {Spearman correlation} & {0/1-Ranking accuracy} \\
  \midrule
      \err     & 0.659 & 0.792 \\
      \ranksvm & 0.656 & 0.778 \\
      \ranknet &  0.633 & 0.770 \\
      \listnet &  0.693 &  0.802 \\
      \rankboost \footnotemark &  & 0.806\\
   \midrule
    \bordanet & \bfseries 0.729 & 0.813 \\
    \newnet &  0.721 & \bfseries 0.814 \\
  \bottomrule
  \end{tabular}
  
  \label{tab:depthbasic}
\end{table*}

The results for the relative depth estimation problem are shown in
Table~\ref{tab:depthbasic}.
We additionally report the results obtained by \citet{Ewerth2017} using
Rankboost on the same dataset (using the same split into training and test).
Both our architectures achieve comparably high Spearman correlation and
ranking accuracy, and slightly outperform the competitors.

\section{Conclusion and Future Work}

In this paper, we addressed the novel problem of learning
\context-dependent ranking functions in the setting of object ranking
and, moreover, proposed two general solutions to this problem. These
solutions are based on two principled ways for representing
context-dependent ranking functions that accept ranking problems of any
size as input and guarantee symmetry.  \ac{FETA} (first evaluate then
aggregate) is a first-order approximation to a more
general latent-utility decomposition, which we proved to be flexible
enough to learn any ranking function in the limit.
\ac{FATE} (first aggregate then evaluate) first transforms each object
into an embedding
space and computes a representative of the context by averaging.
Objects are then scored with this representative as a fixed-size context.
\footcitetext{Ewerth2017, Freund2003}

To enable end-to-end optimization of differentiable ranking losses using
these decompositions, we further contribute two new neural network
architectures
called \bordanet and \newnet.
We demonstrate empirically that both architectures are able to learn
context-dependent ranking functions on both synthetic and real-world data.

While \ac{FETA} and \ac{FATE} appear to be natural approaches to
\context-dependent
ranking, and first experimental results are promising, the theoretical
foundation of context-dependent ranking is still weakly developed. One
important question concerns the expressivity of the two representations,
i.\,e., what type of context-effects they are able to capture, and what
class of context-dependent ranking functions they can model. As already
said, a first result could be established in the case of \ac{FETA},
showing that any ranking function on $N$ objects can be modeled by a
decomposition of order $N-1$ (cf.\ supplementary material). Yet, while
this result is theoretically interesting, a quantification of the
expressivity for practically meaningful model classes (i.\,e., $k$th-order
approximations with small $k$) is an open question. Likewise, for \ac{FATE},
there are no results in this direction so far.


\ifcameraready
\section*{Acknowledgements}
This work is part of the Collaborative Research Center ``On-the-Fly Computing'' at Paderborn University, which is supported by the German Research Foundation (DFG).
Calculations leading to the results presented here were performed on 
resources provided by the Paderborn Center for Parallel Computing.
\fi

\nocite{skopt,Ioffe2015,selu,nesterov1983,rooderkerk2011incorporating,huber1983market}

\printbibliography

\section*{Appendix}

\subsection*{Context effects}
  The compromise effect states that the relative utility of an object
  increases by
  adding an extreme option that makes it a compromise in the set of
  alternatives
  \citep{rooderkerk2011incorporating}.
  For instance, consider the set of objects $\{A, B\}$ in
  Figure~\ref{fig:compromise}.
  The ordering of these objects depends on how much the consumer is
  weighing the
  quality and the price of the product.
  If price is the constraint, then the preference order will be $A \succ B$.
  But as soon as there is another extreme option $C$ available, the
  object $B$ becomes a compromise option between the three alternatives.
  The preference relation between $A$ and $B$ gets inverted and turns into
  $B \succ A$.

  Figure~\ref{fig:attraction} illustrates the attraction effect.
  Here, if we add another object $C$ to the
  set of objects $\{A, B\}$, where $C$ is slightly dominated by $B$,
  the relative utility share for object $B$ increases with respect to $A$.
  The major psychological reason is that consumers have a strong
  preference for dominating products \citep{huber1983market}.
  Thus, the preference relation between $A$ and $B$ may again be influenced.

  The similarity or substitution effect is another phenomenon, according
  to which
  the presence of similar objects tends to reduce the overall probability
  of an object to be chosen, as it will divide the loyalty of potential
  consumers
  \citep{huber1983market}.
  In Figure~\ref{fig:similarity}, $B$ and $C$ are two similar objects.
  Consumers who prefer high quality will be divided amongst the two objects,
  resulting in a decrease of the relative utility share of object $B$.
  Again, this may lead to turning a preference $B \succ A$ into $A \succ
  B$, at least on an aggregate (population) level, if preferences are
  defined on the basis of choice probabilities.

  \begin{figure}[hb]
    \centering
    \begin{subfigure}[c]{0.3\linewidth}
      \includegraphics[width=\linewidth]{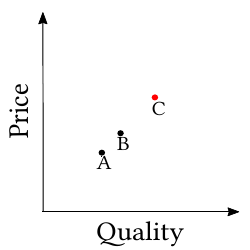}
      \subcaption{Compromise}
      \label{fig:compromise}
    \end{subfigure}
    \begin{subfigure}[c]{0.3\linewidth}
      \includegraphics[width=\linewidth]{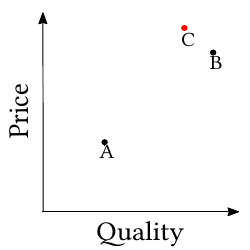}
      \subcaption{Attraction}
      \label{fig:attraction}
    \end{subfigure}
    \begin{subfigure}[c]{0.3\linewidth}
      \includegraphics[width=\linewidth]{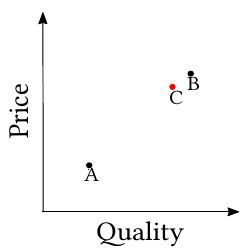}
      \subcaption{Similarity}
      \label{fig:similarity}
    \end{subfigure}
    \caption{Context effects identified in the literature
    \citep{rooderkerk2011incorporating}.}
    \label{fig:context}
  \end{figure}
\subsection*{Expressivity of $K$th order approximation}
  \begin{proposition}\label{prop:bc}
    Let $\cX$ be a set of $N$ objects and $\cQ = 2^{\cX} \setminus \emptyset$
    be the corresponding \problem space.
    Let $\rho \sothat \cQ \to \cR$
    be a ranking function mapping from queries to rankings with $\cR = \bigcup_{n\in [N]} \IS_n$.
    There always exist preference functions
    \[
      U_k \sothat \cX \times \cX^k \to \IR
    \]
    such that the corresponding ranking rule of order $N-1$
    \begin{align}\label{eq:bc}
      \rhobc(C) &= \argsort_{i \in [N]}
      U(\vec{x}_i, C\setminus\{\vec{x}_i\})
      \\
      &= \rho(C)
    \end{align}
    for all \problems $C \in \cQ$.
  \end{proposition}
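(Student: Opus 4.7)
The plan is to reduce the claim to showing that the map sending a sequence of preference functions $(U_0, U_1, \ldots, U_{N-1})$ to the collection of realized scores $\{U(\vec{x}, C\setminus\{\vec{x}\}) : C \in \cQ,\ \vec{x} \in C\}$ is surjective onto $\IR^{\{(\vec{x},C): \vec{x} \in C\}}$. Once this is established, it suffices to choose a score vector $v_{\vec{x},C}$ for each pair $(\vec{x}, C)$ whose descending order agrees with $\rho(C)$ (for instance $v_{\vec{x}_i, C} = -\pi(i)$ where $\pi = \rho(C)$) and invoke surjectivity to produce matching $U_k$.

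The proof of surjectivity will proceed by induction on $m = |C|-1$, exploiting the crucial observation that the term $\bar{U}_m(\vec{x}, C')$ with $|C'| = m$ involves only the single subset $C'$ itself, so
\[
\bar{U}_m(\vec{x}, C') = \frac{1}{\binom{m}{m}} U_m(\vec{x}, C') = U_m(\vec{x}, C').
\]
Hence the top-order function has a ``free slot'' that can absorb whatever residual is left after the lower-order contributions are fixed. Concretely, in the base case $m = 0$, set $U_0(\vec{x}) = v_{\vec{x}, \{\vec{x}\}}$ for every $\vec{x} \in \cX$. In the inductive step, assume $U_0, \ldots, U_{m-1}$ have been defined so that $U(\vec{x}, C\setminus\{\vec{x}\}) = v_{\vec{x}, C}$ for every $C$ with $|C| \leq m$. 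For each $\vec{x} \in \cX$ and each $C' \subseteq \cX \setminus \{\vec{x}\}$ with $|C'| = m$, put $C = C' \cup \{\vec{x}\}$ and define
\[
U_m(\vec{x}, C') \;=\; v_{\vec{x}, C} \;-\; \sum_{k=0}^{m-1} \bar{U}_k(\vec{x}, C'),
\]
where each $\bar{U}_k(\vec{x}, C')$ on the right is already determined. By the observation above, this choice gives $U(\vec{x}, C \setminus \{\vec{x}\}) = v_{\vec{x}, C}$, completing the induction at level $m$.

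Applying this up to $m = N-1$ defines all $U_k$ on their entire domains (the values of $U_k$ on context sets containing $\vec{x}$ itself are never queried in \eqref{eq:agg} and can be set arbitrarily, e.g.\ to zero; symmetry in the context argument is automatic since $U_k$ is defined on subsets). Then for every $C \in \cQ$ and every $\vec{x}_i \in C$ one has $U(\vec{x}_i, C \setminus \{\vec{x}_i\}) = v_{\vec{x}_i, C}$, so $\argsort_{i \in [|C|]} U(\vec{x}_i, C \setminus \{\vec{x}_i\}) = \rho(C)$, which is the desired identity $\rhobc = \rho$.

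The construction is essentially a triangular/Gaussian-elimination argument over the set-inclusion lattice, and I do not expect a genuine obstacle; the only subtlety worth double-checking is the bookkeeping that $U_m(\vec{x}, C')$ is defined independently for each subset $C'$ (so there are no consistency constraints between different slots at the same order $m$), and that each appearance of $U_m$ in the decomposition \eqref{eq:agg} for a larger $C$ with $|C \setminus \{\vec{x}\}| > m$ is handled by the higher-order term $U_{|C|-1}$ at that later induction step rather than by $U_m$ itself. This is precisely what the ``one free slot at the top'' observation guarantees.
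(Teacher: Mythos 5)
Your proof is correct, and it takes a genuinely different route from the paper's. The paper also inducts on the query size, but at each level it only controls \emph{inequalities}: it sets the new top-order scores proportional to $(K+1-\pi^{-1}(i))$ and scales them by $\delta_{\max}+\varepsilon$, where $\delta_{\max}$ bounds the accumulated lower-order contributions, so that the top-order term \emph{dominates} whatever the previously fixed $U_k$ contribute and the desired order relations are forced. You instead prove the stronger statement that the map $(U_0,\dots,U_{N-1})\mapsto\bigl(U(\vec{x},C\setminus\{\vec{x}\})\bigr)_{\vec{x}\in C,\,C\in\cQ}$ is \emph{surjective}: since the top-order term $\bar{U}_m(\vec{x},C')$ with $|C'|=m$ is the single unaveraged value $U_m(\vec{x},C')$ (as $\binom{m}{m}=1$) and each such slot is queried by exactly one pair $(\vec{x},C)$, the system is triangular over the inclusion lattice and you can exactly cancel the lower-order residual, realizing any prescribed score vector such as $v_{\vec{x}_i,C}=-\pi(i)$. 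This buys a cleaner argument (no $\delta_{\max}$ bookkeeping, no $\varepsilon$) and a strictly stronger conclusion (arbitrary score assignments, not just arbitrary rankings, are realizable at order $N-1$), at the cost of being less suggestive of how one might get away with \emph{smaller} top-order corrections; the paper's domination argument at least isolates how large the order-$(K+1)$ terms must be relative to the rest. Your two flagged subtleties — that each $U_m(\vec{x},C')$ is set independently, and that its reappearance inside larger queries is absorbed by the higher-order slot of that larger query — are exactly the right things to check, and both hold. One cosmetic point: the displayed decomposition in the paper starts its sum at $k=1$, so if you take that literally your base case is $m=1$ (with $U(\vec{x},\emptyset)\equiv 0$, which still ranks singletons correctly) rather than $m=0$; this does not affect the argument.
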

  \begin{proof}
    Let $C_{-i} = C \setminus \{\vec{x}_i\}$ be the context for the 
    \problem $C$ when scoring object $\vec{x}_i$.
    First, notice that $\rhobc(C) = \rho(C) = \pi$ if and only if
    $U(\pi(1), C_{-1}) > U(\pi(2), C_{-2}) > \dots > U(\pi(n), C_{-n})$ for a given
    \problem $C$ of size $n$.
    Thus $\rhobc(C) = \rho(C)$ for all $C \in \cQ$ if and only if all
    resulting inequalities defined on the scores are satisfied.

    The result can be shown by induction over the size $K \in \IN$ of the
    maximum \problem size (i.e. $K = \max_{C\in \cQ} |C|$) for which
    $\rho$ is defined.
    In addition denote with $\cQ^{(k)}$ the \problem space of \problems with
    $|C| \leq k$ for $C \in \cQ^{(k)}$.
    For the base case of $K=1$ the corresponding rankings are all of
    size $1$, which is why $\rhobc(C) = \rho(C)$ trivially holds for
    all $C\in \cQ$.

    For $K=2$ we have ranking functions defined on pairs of objects.
    Here it suffices to use a preference function $U_1$ with contexts of size $2$.
    Note, that for one fixed \problem $\{\vec{x}_i, \vec{x}_j\}$
    each preference score $U_2(\vec{x}_i, \{\vec{x}_j\})$ only appears
    in one inequality (i.e. the one in which $U_2(\vec{x}_i, \{\vec{x}_j\})$
    and $U_2(\vec{x}_j, \{\vec{x}_i\})$ are compared).
    It follows that we can set
    $U_2(\vec{x}_i, \{\vec{x}_j\}) = \indic{\vec{x}_i \succ_{\rho} \vec{x}_j}$
    for all pairs $i,j \in [N]$ and
    $U_1(\vec{x}_i, \cdot) = 0$ for all $i \in [N]$.

    For the inductive step assume that for all $k \leq K$ the equality
    $\rhobc(C) = \rho(C)$ holds for all $C \in \cQ^{(k)}$.
    Now we consider the step $K \to K+1$.
    We know that 
    preference scores $U(\vec{x}_i, C_{-i})$ with $|C| = K+1$ only occur for
    inequalities defined for rankings of size $K+1$ and further appear only
    in one inequality since we evaluate it only once for each
    object $\vec{x}_i$.

    This time it is not possible to independently set these preference scores as
    we did before,
    since we have to take into account all the summands in equation (2)
    of the main paper.
    We already know by induction hypothesis that there exists a preference function
    $U$ for any $\rho$ defined on $\cQ^{(K)}$ such that $\rhobc(C) = \rho(C)$
    for all $C\in \cQ^{K}$.
    For any \problem $C \in \cQ^{(K)}$ with $|C| = K$ let 
    \begin{align}
      \delta_{\max} = \max_{\vec{x}_i, \vec{x}_j \in X} \biggl|
      \ U(\vec{x}_i, C_{-i}) - U(\vec{x}_j, C_{-j})\biggr|
    \end{align}
    be the maximum score difference using only the existing preference scores.
    We will use $\delta_{\max}$ now as a step size to define the preference scores
    for rankings of size $K+1$.

    Then set the preference scores $U_{K+1}(\cdot, C_{-\cdot})$ for $|C| = K + 1$ as follows:
    \begin{equation}\label{eq:constr}
      U_{K+1}(\vec{x}_i, C_{-i}) = (K+1 - \pi^{-1}(i)) \cdot  (\delta_{\max} + \varepsilon)
    \end{equation}
    where $\pi^{-1} = \rho^{-1}(X \cup \{\vec{x}_i\})$ and $\varepsilon > 0$.
    In other words, we simply set the scores inversely proportional to the
    position of the object in their respective ranking.
    To guarantee, that the preference scores which were defined on $|C| < K$
    do not have an effect, we additionally scale by the step size
    $\delta_{\max} + \varepsilon$.

    It follows that for any $\vec{x}_i, \vec{x}_j$ and any~$C$ with $|C| = K+1$:
    \begin{align}
      &U(\vec{x}_i, C_{-i}) - U(\vec{x}_j, C_{-j}) \\
  &\leq U_{K+1}(\vec{x}_i, C_{-i})- U_{K+1}(\vec{x}_j, C_{-j}) + \delta_{\max}\\
  &= (\delta_{\max} + \varepsilon) (\pi^{-1}(j) - \pi^{-1}(i)) + \delta_{\max}\\
  &= \delta_{\max} (\pi^{-1}(j) - \pi^{-1}(i) + 1) + \varepsilon (\pi^{-1}(j) - \pi^{-1}(i))
    \end{align}
    Since $\delta_{\max} > 0$ this equation is $ > 0$ if and only if 
    $\pi^{-1}(j) > \pi^{-1}(i)$ and therefore if $\vec{x}_i \succ \vec{x}_j$
    and $< 0$ otherwise.

    We can conclude that we can obtain all possible rankings of size $K+1$
    using this construction. Thus, the statement follows.
  \end{proof}

\subsection*{Loss Functions}\label{ssub:losses}
  A key advantage of the above architectures is that they are fully differentiable,
  allowing us to use any differentiable loss function $\on{\ell}$. In our case, a loss is supposed to compare a ground-truth ranking $\pi$ for a task $Q = \{ \vec{x}_1, \ldots , \vec{x}_n \}$ with a vector $\vec{s}=(s_1, \ldots , s_n)$ of scores predicted for the objects in $Q$. Thus, the loss is of the form $\on{\ell}(\pi, \vec{s})$.

  Unfortunately, many interesting ranking losses, such as the (normalized) \emph{0/1-ranking loss} 
  \begin{align}
    \on{d}_{\text{RL}}(\pi, \vec{s}) = 
      \frac{2}{n(n-1)} \qquad
        \sum_{\mathclap{\substack{(i,j): \pi(i) < \pi(j)}}}
           \Bigl(\indic{s_i < s_j}
         + \frac{1}{2} \indic{s_i = s_j}\Bigr)
         \label{eq:rankloss}
  \end{align}
  or the popular nDCG, are not differentiable.
  Yet, just like in the binary classification setting, we can define a
  \emph{surrogate loss function} that upper bounds the true binary ranking loss,
  is differentiable almost everywhere, and ideally even convex.
  We propose to use the \emph{hinge ranking loss}:
  \begin{align}
    \on{\ell}_{\text{HL}}(\pi, \vec{s}) = 
      \frac{2}{n(n-1)}
        \ \sum_{\mathclap{\substack{(i,j): \\\pi(i) < \pi(j)}}}
        \ \max \big(1 + s_i - s_j, 0 \big)
        \label{eq:hingerl}
  \end{align}
  It is convex and has a constant subgradient with respect to the
  individual scores. Another choice for a differentiable loss function is the
  Plackett-Luce (PL) loss:
  \begin{equation}\label{eq:plloss}
    \ell_{\text{PL}}(\pi, \vec{s}) = 
    \sum_{i=1}^{n-1} \log \biggl( \sum_{j=i}^n \exp\left(s_{\pi^{-1}(j)}\right) \biggr)
    - s_{\pi^{-1}(i)} \enspace ,
  \end{equation}
  which corresponds to the negative logarithm of the PL-probability to observe
  $\pi$ given parameters $\vec{s}$.
  The networks can then be trained by gradient descent and backpropagating the
  loss through the network.

\subsection*{Experimental details}


All experiments are implemented in Python, and the code is publicly
available%
\ifcameraready
\footnote{\url{https://github.com/kiudee/cs-ranking}}.
\else
\footnote{URL will be provided for the camera ready version.}.
\fi
The hyperparameters of each algorithm were tuned with scikit-optimize
\citep{skopt} using nested cross-validation.
For all neural network models, we make use of the following techniques:
We use either ReLU non-linearities + batch normalization \citep{Ioffe2015} or
    SELU non-linearities \citep{selu} for each hidden layer.
For regularization, both $L_1$ and $L_2$ penalties are applied.
 For optimization, stochastic gradient descent with Nesterov momentum
    \citep{nesterov1983} is used.
\listnet has an additional parameter $k$, which specifies the size of the
top-$k$ rankings used for training.
We set this parameter to $3$ in all experiments.

We evaluate the algorithms in terms of 
0/1-accuracy  $\on{d}_{\text{ACC}}(\pi, \tau_{\vec{s}}) = \indic{\pi = \tau_{\vec{s}}}$,
0/1-ranking accuracy $\on{d}_{\text{RA}}(\pi, \vec{s}) = 1 - \on{d}_{\text{RL}}(\pi, \vec{s})$, and Spearman rank correlation $ \on{d}_{\text{Spear}}(\pi, \tau_{\vec{s}}) = 1 - 
    6 \sum_{i=0}^n (\pi(i)-\tau_{\vec{s}}(i))^2)/(n (n^2 - 1))$, where $\tau_{\vec{s}}$ is the ranking induced by the predicted score vector $\vec{s}$.

  \subsubsection*{The Medoid Problem}
    Specifically, we generate
    $\mathcal{D} = \{(Q_1, \pi_1), (Q_2, \pi_2), \dotsc, (Q_N, \pi_N)\}$
    as follows: 
    \begin{enumerate}
      \item Generate data points $Q_i = \{ \vec{x}_{i,1}, \dotsc, \vec{x}_{i,n} \} \subseteq [0, 1]^d$ for the \problems uniformly at random.
      \item Construct the corresponding rankings $\pi_1, \dotsc, \pi_N$ as follows:
        For all $1\leq i\leq N$,
        \begin{enumerate}
          \item compute the medoid
            \[\vec{x}_i^* = \argmin_{\vec{x} \in Q_i} \frac{1}{n} \sum_{j=1}^n \norm{\vec{x} - \vec{x}_{i,j} } \, ,
            \]
          \item compute the ranking 
            \begin{equation}
              \pi_i = \argsort_{j \in [n]} \,  - \norm{\vec{x}_i^* - \vec{x}_{i,j}} \, .
            \end{equation}
      \end{enumerate}
    \end{enumerate}
  \subsubsection*{The Hypervolume Problem}
    The input for the learners are the sets of points on the Pareto-front, and
    the target is a ranking of these points based on their contribution to the
    hypervolume.
    Data generations is done as follows:
    \begin{enumerate}
         \item Generate points for the \problems uniformly on the negative
    surface of
             the unit sphere
             $Q_i = \{\vec{x}_{i,1}, \dotsc, \vec{x}_{i,n} \} \subseteq
             \{\vec{x} \in \IR^d \mid \norm{\vec{x}} = 1, \forall 1\leq
    i\leq d: x_i \leq 0\}$.

         \item Construct the corresponding rankings $\pi_1, \dotsc, \pi_N$
    as follows:
             For all $1\leq i\leq N$,
             \begin{enumerate}
                 \item compute the contributions $\Delta_j$ of each object
    $\vec{x}_{i,j}$
                 to the hypervolume\footnote{We use the PyGMO library to
    compute exact contributions.}
                 of the \problem $Q_i$, i.e.,
                     \[
                         \Delta_{i,j} = \hyp(Q_i) - \hyp(Q_i \setminus
    \{\vec{x_{i,j}}\}) \, ,
                     \]
                 \item and then the ranking
                     \begin{equation}
                         \pi_i = \argsort_{j \in [n]} \; \{-\Delta_{i,j}
    \mid \vec{x}_{i,j} \in Q_i\} \, .
                     \end{equation}
         \end{enumerate}
    \end{enumerate}

\subsection*{Generalization across ranking sizes}\label{sec:otherresults}
  \begin{figure}[htbp]
    \centering  
    \includegraphics[width=0.9\linewidth]{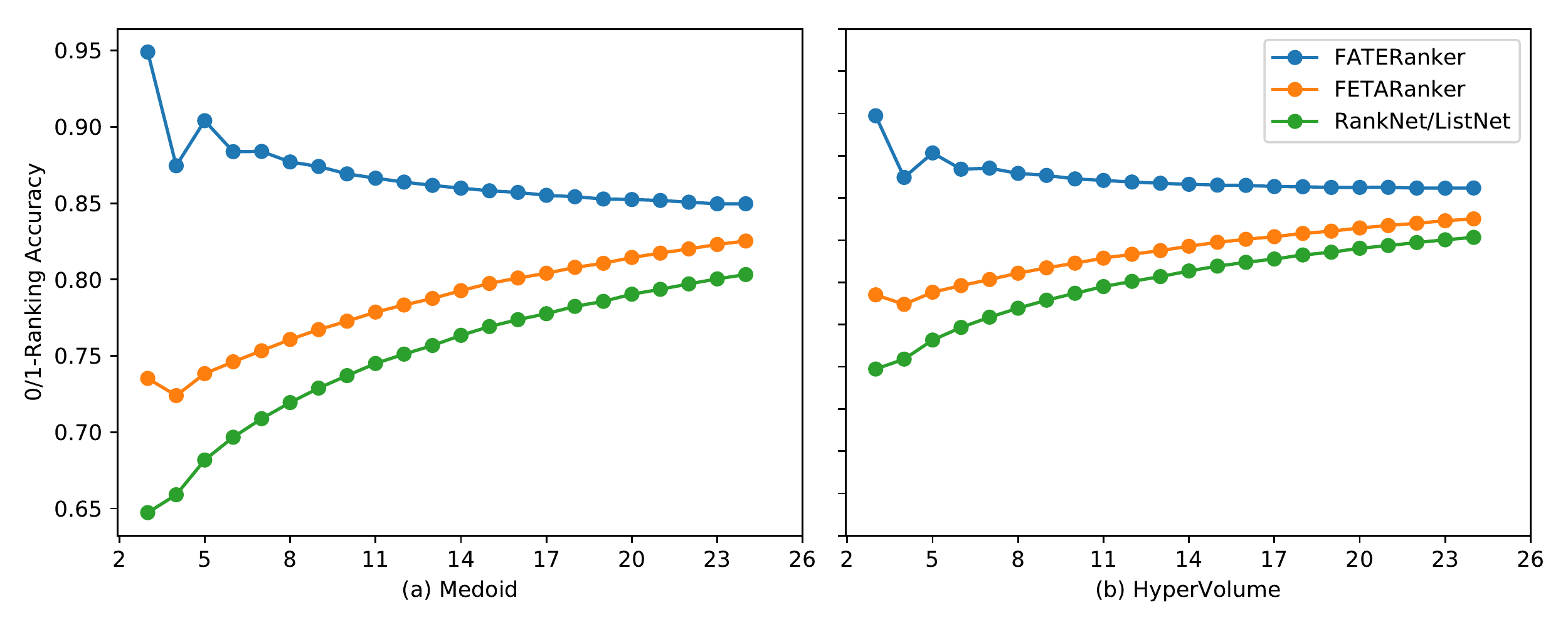}
    \caption{0/1-Ranking accuracy of \newnet trained on rankings of size 5, then
    predicting on \problems of a different size.}
    \label{fig:generalization}
  \end{figure}
  Given that our approaches \newnet and \bordanet outperform the other approaches,
  we are interested in how well all of the approaches generalize to unseen \problem sizes.
  To this end, we apply them on the Medoid and Hypervolume dataset with 5 objects as
  \problem size, and then test it on \problem sizes between 3 and 24.
  The results are shown in Figure~\ref{fig:generalization}.
  Expected rank regression and RankSVM did not achieve better results than random guessing,
  which is why we removed them from the plot.
  It is interesting to note, that for models which model the latent utility
  (i.\,e. \bordanet and \ranknet), the accuracy improves with increasing \problem size.
  This hints at the context-dependency vanishing with increasing number of 
  objects, since they densely populate the space.
  For \newnet, increasing the size of the \problem apparently leads to a 
  slightly lower ranking accuracy.
  Since the representative $\mu_Q$ is computed as the
  average of the object embeddings, this behavior is to be expected
  when the ranking function behaves similarly for different \problem sizes.
\end{document}